\documentclass{article}

\usepackage[T1]{fontenc}
\usepackage{microtype}
\usepackage{graphicx}
\usepackage{booktabs}
\usepackage{hyperref}

\usepackage[accepted]{icml2025}

\usepackage{amsmath}
\usepackage{amssymb}
\usepackage{amsthm}
\usepackage{mathtools}

\newtheorem{proposition}{Proposition}

\graphicspath{{figs/}{../figs/}{../}}

\newcommand{\hmax}{h_{\max}}
\newcommand{\err}{\widehat{\mathrm{err}}}

\makeatletter
\renewcommand{\ICML@appearing}{\textit{Preprint.} Copyright \the\year\ by the author.}
\makeatother

\icmltitlerunning{Adaptive Runge--Kutta Step Control Buys Training Loss, Not Generalization}

\begin{document}

\twocolumn[
\icmltitle{Adaptive Runge--Kutta Step Control Buys Training Loss, Not\\
Generalization: An Honest Compute-Matched Study of RK--Adam Optimizers}

\icmlsetsymbol{equal}{*}

\begin{icmlauthorlist}
\icmlauthor{Akhilesh Gogikar}{ind}
\end{icmlauthorlist}

\icmlaffiliation{ind}{Independent Researcher}

\icmlcorrespondingauthor{Akhilesh Gogikar}{}

\icmlkeywords{Machine Learning, ICML, optimization, Runge-Kutta, adaptive step size, Adam}

\vskip 0.3in
]

\printAffiliationsAndNotice{}

\begin{abstract}
Interpreting optimizers as gradient-flow discretizations has motivated applying higher-order Runge--Kutta (RK) integrators to neural networks. We build a representative Adam variant (Bogacki--Shampine 3(2) RK pair, FSAL reuse, local-error step control) and evaluate it under a strict compute-matched protocol giving every method the same \emph{gradient-evaluation} budget---an accounting this literature rarely enforces. Under it the RK variant loses to plain Adam on training loss in both minibatch and full-batch (RK's best-case) training. Instrumenting it shows the ``adaptivity'' is illusory: normalized error stays far below tolerance, the step size pins at its growth cap from step one (98--100\% of steps), and no $\rho \times \hmax \times h_0$ setting makes it act; tolerances spanning $100\times$ give bit-identical trajectories. The method is exactly fixed-step Adam with an averaged gradient at 3--4$\times$ cost. Repairing it (true reject branch; error on the applied map) reverses the full-batch result---${\sim}40\times$ lower training loss than tuned Adam---and a fixed-step control isolates adaptivity (an emergent warmup-and-growth schedule) as the mechanism. But the gain is fragile to the initial step size and does not reach test accuracy. A pre-registered follow-up rules out the obvious explanations: deeper minimization does not overfit, and an explicit temperature knob only hurts---leaving a \emph{trajectory} effect, the controller selecting a minimum generalizing 1.3--3.4 points below first-order descent at equal depth. An $n{=}10$ study confirms one secondary effect: gradient averaging is a genuine implicit regularizer, beating lr-matched Adam and AdamW on 10/10 seeds---yet RMSprop and NAdam match or beat it at a third the per-step cost. Higher-order adaptive integration buys deeper deterministic minimization and a small regularization effect, but nothing a cheaper, well-tuned first-order baseline does not already provide.
\end{abstract}

\section{Introduction}
\label{sec:intro}

Since the observation that Nesterov's method \cite{nesterov1983method} is a discretization of a second-order ODE \cite{su2016differential}, it has been tempting to run the logic in reverse: if optimizers are integrators, better integrators should be better optimizers. \citet{zhang2018direct} supplied theoretical support, showing that direct Runge--Kutta discretization of gradient/Nesterov flow achieves accelerated rates in the smooth convex setting, and a growing practical literature grafts RK machinery onto deep-learning optimizers---multi-stage averaged gradients, IMEX splittings, and embedded-pair ``adaptive'' step control composed with Adam (Section~\ref{sec:related}).

This transfer, however, rests on accounting conventions that quietly favor the RK side. First, comparisons are typically reported per optimizer \emph{step}, but an $s$-stage RK step costs $s$ gradient evaluations; at equal backprop budget, a 3-stage method must beat its baseline by a wide margin merely to break even. Second, FSAL (``first same as last'') stage reuse---the standard trick that makes embedded pairs cheap \cite{bogacki1989pair,hairer1993solving}---is only mathematically valid for an autonomous vector field; under minibatch sampling, the cached stage belongs to a \emph{different} function than the one being integrated. Third, and least examined: the embedded-pair error estimate that justifies the word ``adaptive'' is derived for the raw RK update, while practical variants apply an Adam-preconditioned update instead. Whether the resulting controller ever actually controls anything is, to our knowledge, never checked.

We check. We implement a representative member of this family---Bogacki--Shampine 3(2) stages driving Adam moments, FSAL where valid, embedded-pair step control---and subject it to a compute-matched protocol in which \emph{every method receives the same number of gradient evaluations}, with FSAL credited only in the full-batch (autonomous) regime. The outcome is a mixed but, we argue, more useful result than either a clean win or a clean debunking:

\textbf{At honest accounting, the method as found in the literature loses.} RK3(2)-Adam trails Adam on training loss at equal evals in both stochastic and full-batch regimes (Tables~\ref{tab:mnist}--\ref{tab:fullbatch}), and its error controller is provably inert in our runs: $h$ saturates at its cap on step one and the tolerance $\rho$ has \emph{zero} effect on the trajectory---bit-identical across a $100\times$ range, in every cell of a full $\rho \times \hmax \times h_0$ sweep (Section~\ref{sec:diagnosis}).

\textbf{The failure is diagnosable and repairable.} Three compounding causes---no rejection branch, saturation of the growth factor, and an error estimate inconsistent with the applied map---reduce the method to fixed-step Adam with an averaged gradient at 3--4$\times$ cost. Repairing the controller reverses the full-batch training-loss comparison by ${\sim}40\times$ (Section~\ref{sec:repaired}).

\textbf{The repaired win is real but narrow.} A fixed-step control at the same $h$ isolates \emph{adaptivity}---an emergent warmup-and-growth schedule---as the mechanism, but the advantage is sensitive to the initial step size and does not transfer to test accuracy (Section~\ref{sec:temperature}).

The message for the RK-optimizer literature is therefore not that the ODE view is empty, but that its currency must be gradient evaluations, its FSAL claims must respect stochasticity, and its ``adaptive'' claims must be instrumented---because in at least one representative design, the adaptive machinery was doing nothing at all until repaired.

\subsection{Contributions}
\label{sec:contributions}

\begin{enumerate}
\item A strict eval-counting protocol (RK step = 3--4 gradient evals; FSAL only credited when mathematically valid, i.e.\ deterministic vector field).
\item Compute-matched evidence that RK3(2)-Adam loses to Adam in both stochastic and full-batch regimes (Tables~\ref{tab:mnist}--\ref{tab:fullbatch}).
\item A diagnosis of why embedded-pair step control is inert when composed with adaptive preconditioning, including a formal saturation condition (Proposition~\ref{prop:sat}), backed by an exhaustive $\rho \times \hmax \times h_0$ sweep showing no hyperparameter setting rescues the as-designed controller (Section~\ref{sec:diagnosis}).
\item A repaired-controller ablation (Section~\ref{sec:repaired}) showing embedded-pair adaptivity \emph{can} out-minimize Adam on full-batch training loss at matched budget---with a fixed-step control isolating adaptivity as the mechanism---but with no test-accuracy benefit and narrow $h_0$ tolerance.
\item A confirmed secondary finding: at $\mathrm{lr}{=}0.003$ full-batch, RK3(2)-Adam reaches higher \emph{test} accuracy than lr-matched Adam/AdamW despite equal-or-worse train loss---$+0.51$ pts (defaults) to $+0.80$ pts ($\hmax{=}8\times$), winning on 10/10 seeds (paired $t = 6.0$--$6.5$, $n{=}10$; Section~\ref{sec:fullbatch})---consistent with the averaged gradient acting as implicit regularization. Tempering this: the effect is lr-specific, and RMSprop (90.02\%), NAdam (89.94\%), and Adam at $\mathrm{lr}{=}0.01$ (90.30\%) match or beat the best RK configuration (89.83\%) at one third the per-step cost.
\item A pre-registered test (Section~\ref{sec:temperature}) of whether the repaired controller's generalization shortfall is a cold-posterior \emph{tempering} effect \cite{wenzel2020good}: it is not. Deep minimization is generalization-neutral, and an explicit temperature (pSGLD) knob monotonically hurts, isolating the deficit as a \emph{trajectory / implicit-bias} effect.
\end{enumerate}

\section{Method (As Designed)}
\label{sec:method}

\texttt{AdaptiveEmbeddedRK3Optimizer} applies Bogacki--Shampine 3(2) stages to the gradient flow $\dot\theta = -\nabla L(\theta)$. The third-order averaged gradient
\begin{equation}
g_3 = \tfrac{2}{9}k_1 + \tfrac{1}{3}k_2 + \tfrac{4}{9}k_3
\end{equation}
drives standard Adam moments \cite{kingma2015adam}; FSAL caches the post-step gradient as the next $k_1$. The step size is controlled by
\begin{equation}
\label{eq:controller}
\begin{aligned}
\phi &= \min\!\big(\max(0.9\,\err^{-1/3},\, 0.5),\, 2.0\big),\\
h &\leftarrow \mathrm{clip}\!\big(h\,\phi,\; [0.1\,\mathrm{lr},\, \hmax]\big),
\end{aligned}
\end{equation}
with $\hmax = 2\,\mathrm{lr}$ by default and $\err$ the RMS of $h(g_3 - g_2)/(\mathrm{atol} + \rho\,|\theta|)$, where $g_2$ is the embedded second-order combination $(\tfrac{7}{24}, \tfrac14, \tfrac13, \tfrac18)$ and $\rho$ denotes the relative tolerance (\texttt{rtol}).

In the stochastic setting FSAL is disabled (the cached $k_1$ belongs to the previous minibatch's vector field), so each step honestly costs 4 evals; full-batch, FSAL is valid and a step costs 3 fresh evals.

\section{Experiment 1: Stochastic Minibatch MNIST}
\label{sec:mnist}

\begin{table}[t]
\caption{Stochastic minibatch MNIST \cite{lecun1998gradient}, 784--128--10 MLP, batch 128, 4000 gradient evals/seed, seeds $\{0,1,2\}$. Test accuracy (\%).}
\label{tab:mnist}
\vskip 0.1in
\centering
\small
\begin{tabular}{lcc}
\toprule
Config & lr $=0.001$ & lr $=0.003$ \\
\midrule
Adam & $\mathbf{97.76 \pm 0.14}$ & $97.13 \pm 0.15$ \\
AdamW (wd $10^{-2}$) & $97.75 \pm 0.11$ & $97.41 \pm 0.40$ \\
RK3(2)-Adam, $\rho{=}0.01$ & $97.26 \pm 0.17$ & $96.61 \pm 0.28$ \\
RK3(2)-Adam, $\rho{=}0.1$ & $97.26 \pm 0.17$ & $96.61 \pm 0.28$ \\
\bottomrule
\end{tabular}
\vskip -0.1in
\end{table}

RK3(2)-Adam loses by ${\sim}0.5$ pts at both learning rates---well outside seed noise (Table~\ref{tab:mnist}). The rows for $\rho=0.01$ and $\rho=0.1$ are \textbf{bit-identical}: the error controller never influenced a single step (Section~\ref{sec:diagnosis}).

\section{Experiment 2: Full-Batch (RK's Best-Case Regime)}
\label{sec:fullbatch}

\begin{table*}[t]
\caption{Full-batch training on a 1024-example MNIST subset (FSAL valid), 600 evals/seed, seeds $\{0,1,2\}$. Final train loss; test accuracy at lr $=0.003$.}
\label{tab:fullbatch}
\vskip 0.1in
\centering
\small
\begin{tabular}{lccc}
\toprule
Config & lr $=0.001$ loss & lr $=0.003$ loss & acc (\%) \\
\midrule
GD (Euler) & $1.368 \pm 0.064$ & $0.6269 \pm 0.0274$ & 84.35 \\
Adam & $\mathbf{0.000609 \pm 0.000033}$ & $\mathbf{0.000103 \pm 0.000002}$ & 88.92 \\
AdamW & $0.000627 \pm 0.000034$ & $0.000115 \pm 0.000002$ & 88.97 \\
RMSprop & $0.000302 \pm 0.000019$ & $0.000180 \pm 0.000012$ & 90.11 \\
NAdam & $0.001034 \pm 0.000025$ & $0.000258 \pm 0.000025$ & 90.00 \\
RAdam & $0.006300 \pm 0.000137$ & $0.000887 \pm 0.000061$ & 88.73 \\
Adagrad & $0.078826 \pm 0.001585$ & $0.012425 \pm 0.000950$ & 89.14 \\
SGD+mom / Nesterov & $0.276$ / $0.276$ & $0.0845$ / $0.0841$ & 89.16 \\
RK3(2)-Adam (defaults) & $0.001488 \pm 0.000113$ & $0.000350 \pm 0.000096$ & $\mathbf{89.60}$ \\
\bottomrule
\end{tabular}
\vskip -0.1in
\end{table*}

\begin{figure}[t]
\centering
\includegraphics[width=\columnwidth]{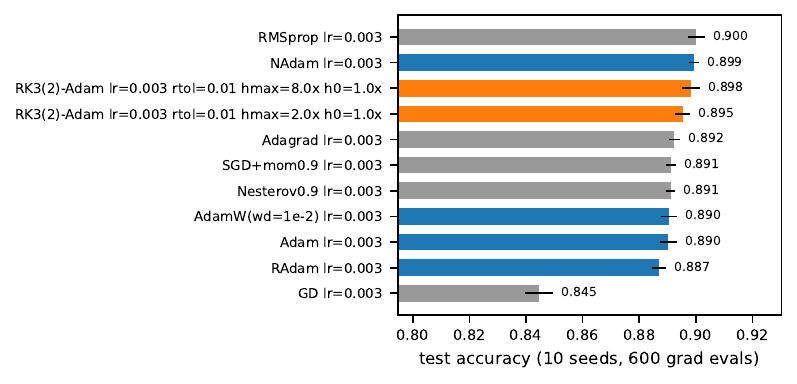}
\caption{Compute-matched full-batch comparison at lr $=0.003$ ($n{=}10$ seeds, 600 gradient evals, mean $\pm$ std test accuracy). RK3(2)-Adam (red) beats lr-matched Adam/AdamW (blue) but never clears the first-order baseline pool: RMSprop and NAdam match or beat every RK configuration at one third the per-step cost.}
\label{fig:comparison}
\end{figure}

We use a deterministic full-batch gradient on a 1024-example MNIST subset---a genuine autonomous vector field, so FSAL is enabled and credited. The primary metric is final training loss (the pure ``integrate gradient flow'' test); GD is the Euler baseline. The baseline pool spans the standard first-order optimizers---SGD with momentum/Nesterov, Adagrad~\cite{duchi2011adaptive}, RMSprop~\cite{tieleman2012rmsprop}, Adam~\cite{kingma2015adam}, AdamW~\cite{loshchilov2019decoupled}, NAdam~\cite{dozat2016incorporating}, and RAdam~\cite{liu2020variance}.

Even with a deterministic vector field, valid FSAL, and real truncation error, Adam reaches 2.4--3.4$\times$ lower loss at equal compute (Table~\ref{tab:fullbatch}), and the as-designed RK3(2)-Adam is also beaten on loss by RMSprop and NAdam at $\mathrm{lr}{=}0.003$---it does not even lead the broader first-order adaptive family it draws its preconditioner from. Tolerance settings are again bit-identical. The comparison replicates at $n{=}10$ seeds (Adam $0.000117 \pm 0.000012$ vs.\ RK3(2)-Adam $0.000375 \pm 0.000052$ at $\mathrm{lr}{=}0.003$; $\rho{=}0.01$ and $\rho{=}0.1$ identical to the digit).

\textbf{Multi-seed chase of the test-accuracy wrinkle ($n{=}10$).} Table~\ref{tab:fullbatch} hints that RK3(2)-Adam generalizes \emph{better} than Adam despite worse train loss. An $n{=}10$ replication (seeds 0--9; Figure~\ref{fig:comparison}) confirms the effect is real and seed-consistent at $\mathrm{lr}{=}0.003$: RK3(2)-Adam (defaults) $89.53\%$ vs.\ Adam $89.03\%$ / AdamW $89.05\%$, winning the paired comparison on \textbf{10/10 seeds} (paired $t = 6.0$, $+0.51 \pm 0.27$ pts); the best-loss RK configuration ($\hmax{=}8\times$) reaches $89.83\%$, $+0.80$ pts over Adam ($t = 6.5$, 10/10). Two qualifiers keep this from being a headline win: (i) the effect is lr-specific---at $\mathrm{lr}{=}0.001$ it vanishes ($-0.08$ pts, $t = -1.4$, RK wins 3/10 seeds); and (ii) it never clears the baseline pool---RMSprop ($90.02\%$) and NAdam ($89.94\%$) match or beat the best RK configuration at one third the per-step cost (paired $t = -1.2$ and $-1.3$ against $\hmax{=}8\times$; $-3.4$ and $-5.2$ against the defaults), and Adam at $\mathrm{lr}{=}0.01$ attains $90.30\%$ (Table~\ref{tab:hmaxcontrol}). At equal gradient budget, RK gradient averaging buys a real regularization effect \emph{relative to lr-matched Adam}, and nothing relative to the first-order family Adam belongs to.

\section{Why the ``Adaptive'' Controller Is Inert}
\label{sec:diagnosis}

\begin{figure*}[t]
\centering
\includegraphics[width=\textwidth]{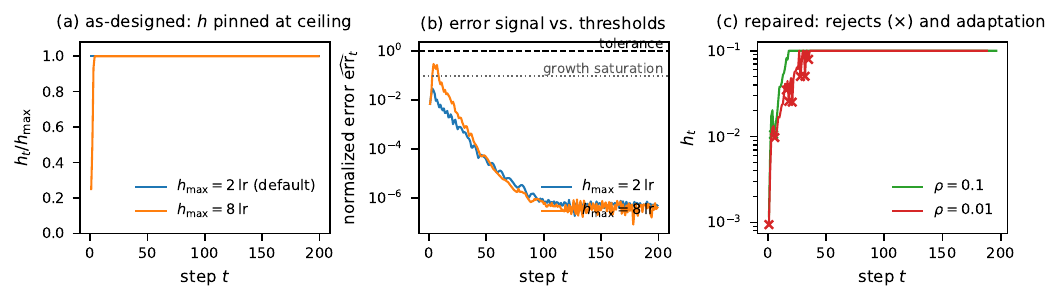}
\caption{Direct instrumentation of the step-size controller (full-batch, lr $=0.003$, 600 evals, seed 0). \textbf{(a)} As designed, $h_t$ jumps to $\hmax$ on step one and pins there---100\% of steps at the ceiling with defaults, 98.5\% at $\hmax{=}8\times$. \textbf{(b)} The normalized error signal never approaches the tolerance ($\err = 1$); with defaults it stays below the growth-saturation threshold $(0.9/2)^3 \approx 0.091$ (Proposition~\ref{prop:sat}), so the growth factor is pinned at its cap. \textbf{(c)} The repaired controller (Section~\ref{sec:repaired}) genuinely adapts: cautious warmup, rejection events ($\times$), and growth to the ceiling---an emergent warmup-and-growth schedule.}
\label{fig:controller}
\end{figure*}

Direct instrumentation ($\mathrm{lr}{=}0.003$, full-batch) shows $h$ jumps from lr to $\hmax = 2\,\mathrm{lr}$ on the first step and pins there for the entire run, for every tolerance tested (Figure~\ref{fig:controller}a). Three compounding causes:

\textbf{1. No rejection branch.} The step is always accepted; $\err$ only rescales the next $h$. ``Error control'' can therefore never undo a bad step.

\textbf{2. Saturation.} The normalized error is far below 1 throughout training, so the growth factor saturates at its $2.0$ cap every step and $h$ is permanently clipped to $\hmax$. The mechanism admits a precise statement:

\begin{proposition}[Controller saturation]
\label{prop:sat}
Under the update rule \eqref{eq:controller}, if $\err_t \le (0.9/2)^3 \approx 0.0911$ for all $t$, then the growth factor equals its cap $2.0$ at every step, so $h_{t+1} = \min(2h_t, \hmax)$ and $h_t = \hmax$ for all $t \ge \lceil \log_2 (\hmax/h_0) \rceil$. On this event the trajectory is independent of $\rho$ up to the value of $\err_t$ itself remaining below the threshold; in particular any two tolerances $\rho_1, \rho_2$ for which the condition holds produce bit-identical iterates.
\end{proposition}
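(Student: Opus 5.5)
The proof is a direct computation from the update rule \eqref{eq:controller}, followed by an induction on $t$. I will treat $\err_t > 0$; if $\err_t = 0$, the factor $\err^{-1/3}$ is read as $+\infty$ and the cap applies trivially.

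\textbf{Step 1: the growth factor saturates.} The map $e \mapsto 0.9\,e^{-1/3}$ is strictly decreasing. At $e = (0.9/2)^3$ it equals $0.9 \cdot (2/0.9) = 2$. So $\err_t \le (0.9/2)^3$ gives $0.9\,\err_t^{-1/3} \ge 2 > 0.5$. The inner $\max$ therefore returns $0.9\,\err_t^{-1/3}$, and the outer $\min$ returns $2.0$. Hence $\phi_t = 2$ for every $t$.

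\textbf{Step 2: the step-size recursion.} Substituting $\phi_t = 2$ into the clip gives $h_{t+1} = \mathrm{clip}(2h_t, [0.1\,\mathrm{lr}, \hmax])$. I need $h_0 \ge 0.1\,\mathrm{lr}$; this holds in the as-designed setting, where $h_0 = \mathrm{lr} \le \hmax$. A short induction shows $h_t \ge h_0 \ge 0.1\,\mathrm{lr}$ for all $t$, so the lower clip never binds and $h_{t+1} = \min(2h_t, \hmax)$. Unrolling gives $h_t = \min(2^t h_0, \hmax)$. This equals $\hmax$ once $2^t \ge \hmax/h_0$, that is, for $t \ge \lceil \log_2(\hmax/h_0)\rceil$.

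\textbf{Step 3: independence from $\rho$.} This is the only delicate point, and it is a matter of precise bookkeeping rather than depth. I will show that, given the initial state $(\theta_0, h_0, m_0, v_0, k_1)$, the sequence $h_t$ is a deterministic function of $h_0$, $\hmax$ and $\mathrm{lr}$ alone on the event in question. The RK stages, $g_3$, the Adam moment updates and the parameter update depend on $\rho$ only through $h_t$. This uses the fact that there is no rejection branch, so $\err_t$ never alters the accepted step. The FSAL cache is likewise $\rho$-free, being the gradient at the new iterate.

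I then run a joint induction on $t$ for two tolerances $\rho_1, \rho_2$ that both satisfy the hypothesis. The inductive claim is that the two runs agree on $\theta_t$, $m_t$, $v_t$, $h_t$ and the cache. The stages agree because they are computed by the same floating-point operations on identical inputs. The values of $\err_t$ may differ between the two runs, but both lie below the threshold, so both runs take $\phi = 2$ and produce the same $h_{t+1}$. The remaining state agrees by the same identical-operations argument. Bit-identity then follows, because every operation that feeds the state is identical and deterministic. The only place $\rho$ enters is the value of $\err_t$, which is discarded through the saturated $\min$.

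One caveat is that the hypothesis must be checked along each run's own trajectory. The induction handles this: the trajectories coincide, so the per-run conditions are exactly the hypothesis as stated.
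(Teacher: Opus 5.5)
Your proof is correct and follows the same route as the paper's: the equivalence $\err_t \le (0.9/2)^3 \iff 0.9\,\err_t^{-1/3} \ge 2$ forces $\phi_t = 2$, after which the clipped recursion doubles $h$ until it hits $\hmax$ and stays there. Where the paper gives a one-line argument, you state its tacit assumptions explicitly: $0.1\,\mathrm{lr} \le h_0 \le \hmax$, so the lower clip is slack and the stated saturation time is correct, and the structural facts behind bit-identity (no rejection branch, and $\rho$ entering only through $\err_t$), which your joint induction over the two runs uses cleanly.
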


\begin{proof}
Immediate from \eqref{eq:controller}: $\err_t \le (0.9/2)^3$ iff $0.9\,\err_t^{-1/3} \ge 2$, so the $\min$ binds at $2.0$ irrespective of the exact value of $\err_t$, and the recursion doubles $h$ until the clip at $\hmax$ binds, after which it is a fixed point.
\end{proof}

Empirically the condition holds with wide margin: the maximum normalized error over the entire run is $0.029$ with defaults ($3\times$ below threshold; Figure~\ref{fig:controller}b), so $h$ is pinned for 100\% of steps and the tolerance is unread in the only place it could act.

\textbf{3. Inconsistent estimate.} The embedded pair estimates the truncation error of the raw RK3 step $\theta - h g_3$, but the applied update is the Adam-preconditioned $\theta - h\, \hat m/\sqrt{\hat v}$. Moreover the FSAL stage is evaluated at the \emph{Adam} point, so the ``second-order solution'' mixes stages of two different maps. The estimate does not measure the error of any integrator actually being run.

Consequence: the method is exactly fixed-step Adam driven by a 3-stage averaged gradient, at 3--4$\times$ the per-step gradient cost. Under minibatch noise the same estimate is additionally swamped by gradient variance ($\mathbb{E}\|g_3 - g_2\|$ is dominated by sampling noise, not $h^3$ truncation error), so no choice of $\rho$ can make it informative.

\subsection{Systematic Sweep: The Diagnosis Holds Across the Hyperparameter Cube}
\label{sec:sweep}

To rule out the possibility that inertness is an artifact of one default setting, we swept the full cube $\rho \in \{0.01, 0.1, 1.0\} \times \hmax \in \{2\times, 8\times\} \times h_0 \in \{0.5\times, 1\times, 2\times\}$ at both learning rates (36 RK configs + 18 baselines, 600 evals, 3 seeds each). Three findings:

\textbf{1. $\rho$ is inert across two orders of magnitude.} All 12 $\{\rho = 0.01, 0.1, 1.0\}$ triplets are bit-identical to six decimals (e.g.\ lr $=0.003$, $\hmax{=}8\times$: loss $0.000109$ / $0.000112$ / $0.000112$---the residual $3{\times}10^{-6}$ gap separates $\rho{=}0.01$ only because its first-step $h$ differs before saturation; $\rho{=}0.1$ and $1.0$ agree exactly). Instrumented step counts show the step pinned at the ceiling on \textbf{98--100\% of steps} in every configuration. A $100\times$ tolerance swing that changes nothing is the definition of a controller that does not control.

\textbf{2. $h_0$ is erased within a few steps.} The $h_0$ sweep produces distinct but nearly equivalent runs (lr $=0.001$, $\hmax{=}8\times$: loss $0.000291$ / $0.000328$ / $0.000338$ for $h_0 = 0.5{\times}/1{\times}/2{\times}$) because the saturated growth factor climbs any initial step to the ceiling almost immediately (Proposition~\ref{prop:sat}). The one initial condition the user can set is forgotten by the ``adaptive'' mechanism.

\textbf{3. The only knob that matters is $\hmax$---i.e., a learning rate.} Raising the ceiling $2{\times}{\to}8{\times}$ improves loss ${\sim}4.5\times$ at lr $=0.001$ ($0.001488 \to 0.000328$) and ${\sim}3\times$ at lr $=0.003$ ($0.000350 \to 0.000109$). The best RK3(2)-Adam configuration in the entire 36-point cube (lr $=0.003$, $\hmax{=}8\times$: $0.000109 \pm 0.000022$, $89.94\%$) statistically ties---does not beat---plain Adam at lr $=0.003$ ($0.000103 \pm 0.000002$) on loss, at 3--4$\times$ the per-step gradient cost. The effective step at that optimum is $h \approx 8\,\mathrm{lr} = 0.024$, close to Adam's own well-tuned regime (Table~\ref{tab:hmaxcontrol}: Adam lr $=0.01$ is best overall), confirming that tuning $\hmax$ is simply re-tuning the learning rate through an expensive proxy.

The sweep upgrades the diagnosis from an instrumented anecdote to an exhaustive negative: within the method as designed, there is \textbf{no setting of the adaptivity hyperparameters} under which the error controller influences the trajectory. All observed variation is explained by two ordinary hyperparameters---effective step size ($\mathrm{lr} \cdot \hmax$ scale) and, weakly, $h_0$'s first-step transient.

\section{Experiment 3: Repaired-Controller Ablation}
\label{sec:repaired}

Section~\ref{sec:sweep} established that no \emph{hyperparameter} setting rescues the as-designed controller; the remaining objection is that the \emph{design} itself is one bug-fix away from working. We therefore repaired the controller: a true accept/reject branch (reject $\to$ retry with smaller $h$, no parameter update), the error measured on the actually-applied map, and $\hmax = 100\,h_0$. Two variants: \textbf{PureRK3(2)} (raw Bogacki--Shampine step, no preconditioning---the estimate is consistent) and \textbf{FixedRK3Adam} (Adam-preconditioned, error on the applied update). Same full-batch protocol as Section~\ref{sec:fullbatch}.

\begin{table*}[t]
\caption{Repaired-controller results (600 evals, 3 seeds).}
\label{tab:repaired}
\vskip 0.1in
\centering
\small
\begin{tabular}{lcccc}
\toprule
Config & Train loss & Acc (\%) & Rej./seed & Final $h$ \\
\midrule
Adam lr $=0.001$ & $0.000609 \pm 0.000033$ & 88.57 & 0 & --- \\
Adam lr $=0.003$ & $0.000103 \pm 0.000002$ & 88.92 & 0 & --- \\
\textbf{FixedRK3Adam} $h_0{=}10^{-3}$, $\rho{=}0.01$ & $\mathbf{0.0000025 \pm 0.0000006}$ & 88.67 & 9.3 & 0.100 \\
FixedRK3Adam $h_0{=}10^{-3}$, $\rho{=}0.1$ & $0.000005 \pm 0.000005$ & 89.03 & 3.3 & 0.100 \\
PureRK3(2) $h_0{=}10^{-3}$ (both $\rho$) & $0.070635 \pm 0.000981$ & 89.20 & 0 & 0.100 \\
FixedRK3Adam $h_0{=}0.003$, $\rho{=}0.01$ & $0.256485 \pm 0.279860$ & 86.21 & 96.7 & 0.171 \\
FixedRK3Adam $h_0{=}0.003$, $\rho{=}0.1$ & $0.220222 \pm 0.200785$ & 87.46 & 32.3 & 0.281 \\
PureRK3(2) $h_0{=}0.003$, $\rho{=}0.1$ & $0.011915 \pm 0.000085$ & 89.94 & 0.3 & 0.300 \\
\bottomrule
\end{tabular}
\vskip -0.1in
\end{table*}

With a functional controller the picture inverts on training loss (Table~\ref{tab:repaired}): repaired FixedRK3Adam ($h_0{=}10^{-3}$) reaches $2.5\times 10^{-6}$---${\sim}40\times$ below the best Adam ($1.0{\times}10^{-4}$)---and now $\rho$ genuinely changes trajectories (rejections occur; $\rho{=}0.01$ vs.\ $0.1$ differ; Figure~\ref{fig:controller}c). But the winning configurations all terminate with $h$ pinned at the $\hmax = 0.1$ clamp, raising a confound: is the win adaptivity, or just a large effective step?

\begin{table}[t]
\caption{$\hmax$ control (600 evals, 3 seeds).}
\label{tab:hmaxcontrol}
\vskip 0.1in
\centering
\scriptsize
\setlength{\tabcolsep}{4pt}
\begin{tabular}{lcc}
\toprule
Config & Train loss & Acc (\%) \\
\midrule
FixedStep-RK3Adam $h{=}0.1$ & $0.186340 \pm 0.033147$ & 80.05 \\
FixedStep-RK3Adam $h{=}0.03$ & $0.000639 \pm 0.000643$ & 88.53 \\
Adam lr $=0.1$ & $0.383101 \pm 0.115005$ & 72.97 \\
Adam lr $=0.03$ & $0.000115 \pm 0.000094$ & 87.93 \\
\textbf{Adam lr $=0.01$} & $0.000068 \pm 0.000020$ & $\mathbf{90.30}$ \\
\bottomrule
\end{tabular}
\vskip -0.1in
\end{table}

The control resolves the confound in favor of adaptivity: a step \emph{fixed} at the saturated $h = 0.1$ is far worse ($0.186$) than the adaptive run that ends there ($2.5\times 10^{-6}$). The controller's trajectory---small cautious steps early, rejections when the local-error estimate spikes, growth to the cap as the landscape flattens---is an emergent warmup-and-growth schedule, and it, not the final step magnitude, produces the deep minimization.

Three caveats bound the claim: (i) the advantage is confined to \emph{training loss}---no repaired-RK configuration reaches $90\%$ test accuracy, none matches the study-best $90.30\%$ of Adam lr $=0.01$ (at one third the per-step gradient cost), and Section~\ref{sec:temperature} shows the shortfall is a \emph{trajectory} effect, not deeper minimization; (ii) the win is fragile to the $(h_0, \rho)$ pairing---characterized below; (iii) PureRK3(2), the only variant whose error estimate is mathematically consistent, never beats Adam on loss, so the benefit comes from the \emph{controller heuristic} composed with Adam, not from higher-order accuracy per se.

\begin{figure}[t]
\centering
\includegraphics[width=\columnwidth]{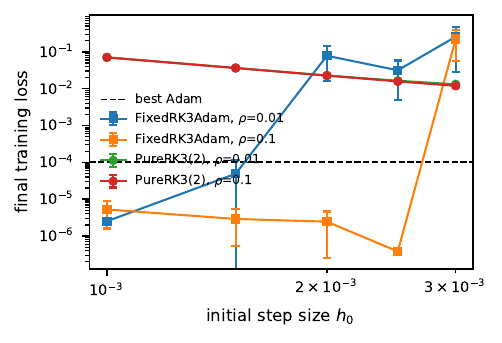}
\caption{The fragility boundary: repaired-controller final loss vs.\ initial step $h_0$ (600 evals, 3 seeds, mean $\pm$ std). The instability tracks the $(h_0, \rho)$ \emph{pairing}: the tighter tolerance $\rho{=}0.01$ breaks by $h_0 = 0.002$ (reject-retry thrashing consumes the eval budget) while $\rho{=}0.1$ is stable---indeed deepest---through $h_0 = 0.0025$.}
\label{fig:fragility}
\end{figure}

\begin{table}[t]
\caption{Fragility boundary: train loss (rejects/seed) over the $h_0 \times \rho$ grid (600 evals, 3 seeds).}
\label{tab:fragility}
\vskip 0.1in
\centering
\scriptsize
\setlength{\tabcolsep}{3pt}
\begin{tabular}{lcc}
\toprule
$h_0$ & $\rho=0.01$ & $\rho=0.1$ \\
\midrule
0.001 & $2.5{\times}10^{-6}$ (9.3) & $5.3{\times}10^{-6}$ (3.3) \\
0.0015 & $4.9{\times}10^{-5} \pm 8.4{\times}10^{-5}$ (30.0) & $2.9{\times}10^{-6}$ (3.7) \\
0.002 & $7.8{\times}10^{-2} \pm 7.6{\times}10^{-2}$ (79.3) & $2.5{\times}10^{-6}$ (6.0) \\
0.0025 & $3.1{\times}10^{-2} \pm 3.2{\times}10^{-2}$ (81.0) & $\mathbf{3.8{\times}10^{-7}}$ (6.7) \\
0.003 & $2.6{\times}10^{-1} \pm 2.8{\times}10^{-1}$ (96.7) & $2.2{\times}10^{-1} \pm 2.0{\times}10^{-1}$ (32.3) \\
\bottomrule
\end{tabular}
\vskip -0.1in
\end{table}

\textbf{The fragility boundary.} A five-point $h_0$ grid (Table~\ref{tab:fragility}, Figure~\ref{fig:fragility}) shows the instability tracks the \emph{pairing} of $h_0$ with $\rho$, not $h_0$ alone---and, counterintuitively, the \textbf{tighter} tolerance is the fragile one. $\rho{=}0.01$ begins degrading at $h_0 = 0.0015$ and is broken by $0.002$; $\rho{=}0.1$ is stable---indeed deepest ($3.8\times 10^{-7}$, the lowest loss in the study)---through $0.0025$. The mechanism is visible in the rejection column: stable runs reject 3--10 steps/seed, broken runs 30--97; a tight tolerance turns large-$h_0$ transients into reject-retry thrashing that consumes the eval budget, while the looser tolerance rides through them. The rejection rate is thus a \emph{leading indicator} of the instability, reinforcing the reporting standard of Section~\ref{sec:discussion}. Across these configurations, deeper minima co-occur with lower test accuracy ($89.03\%$ at $h_0{=}0.001 \to 86.95\%$ at $h_0{=}0.0025$)---tempting to read as over-minimization overfitting, but it confounds minimization \emph{depth} with the controller \emph{trajectory} that reaches it. Section~\ref{sec:temperature} disentangles the two with a pre-registered test and finds the depth reading is wrong.

\section{Experiment 4: The Generalization Deficit Is Trajectory-Driven, Not Tempering}
\label{sec:temperature}

Section~\ref{sec:repaired} leaves a puzzle: the repaired controller drives training loss ${\sim}40\times$ below Adam yet generalizes \emph{worse} than a moderately-tuned baseline. A fashionable explanation is \textbf{tempering}. An optimizer minimizing a loss $L$ is, in the Langevin view, sampling the Gibbs density $p(\theta) \propto \exp(-L/T)$ at temperature $T \to 0$; deep minimization is \emph{cold} sampling, and the cold-posterior effect \cite{wenzel2020good} establishes that generalization can be non-monotone in $T$ with an interior optimum $T^* > 0$. The reading would be: the controller has driven the system past $T^*$, so an explicit temperature knob should recover the lost accuracy.

We tested this directly, grounding the controller's implicit tempering as an \emph{explicit} temperature via preconditioned Stochastic Gradient Langevin Dynamics (pSGLD; \citealt{li2016preconditioned}), the SG-MCMC kernel closest to our Adam-preconditioned update:
\begin{equation}
\theta \leftarrow \theta - \mathrm{lr}\, M g + \sqrt{2\,\mathrm{lr}\, T\, M}\, \eta, \quad \eta \sim \mathcal{N}(0, I),
\end{equation}
with $M = \mathrm{diag}(1/(\sqrt{\hat v} + \epsilon))$ the RMSprop preconditioner ($T = 0$ recovers a deterministic minimizer; the divergence-correction term is dropped, per \citealt{li2016preconditioned}). Because our controller carries Adam momentum, its exact SG-MCMC analogue is the underdamped SGHMC \cite{chen2014stochastic}; pSGLD is the overdamped simplification that suffices to move the temperature knob. Two predictions were \textbf{pre-registered in code before running}: H1 (tempering supported)---test accuracy is non-monotone in $T$ with an interior optimum, and the within-run accuracy trajectory at $T = 0$ peaks then declines; H0 (refuted)---$T = 0$ is best and no overfitting peak exists.

\begin{figure}[t]
\centering
\includegraphics[width=\columnwidth]{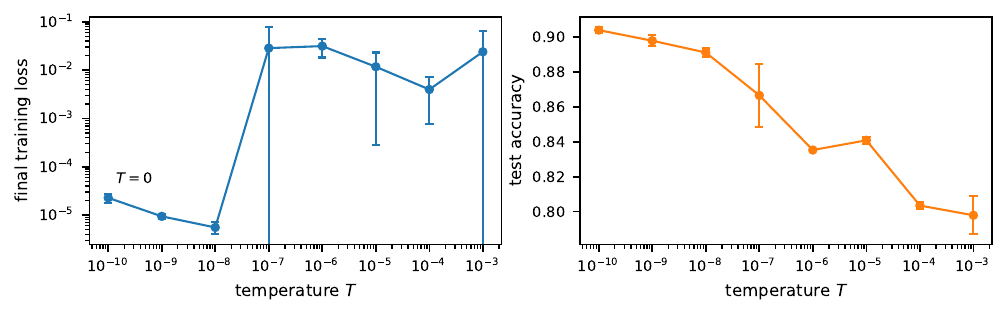}
\caption{Temperature sweep (pSGLD, 3 seeds, budget 2000). Left: final train loss. Right: test accuracy. $T = 0$ is the global optimum of the sweep; every $T > 0$ falls $\geq 0.6$ pt below pure minimization---the tempering hypothesis is refuted.}
\label{fig:temperature}
\end{figure}

\textbf{Test 1---does pure minimization overfit? No.} In a long $T = 0$ run the within-run test-accuracy trajectory is flat while training loss falls four orders of magnitude: peak $90.77\%$ at eval 200 (loss $1.6\times 10^{-3}$), $90.58\%$ at eval 6000 (loss $9\times 10^{-7}$)---a $0.19$-pt change. Across 5 seeds the deep-minimization endpoint (loss $1.12\times 10^{-6}$) tests $\mathbf{90.37 \pm 0.24\%}$, statistically identical to its shallow-minimization accuracy. Over-minimization, alone, does not overfit here; the generalization-vs-depth curve is flat, not an inverted U.

\textbf{Test 2---does temperature help? No.} $T = 0$ gives the best test accuracy and is the \emph{global} optimum of the sweep---no $T > 0$ comes within $0.6$ pt (Figure~\ref{fig:temperature})---so the automated, pre-registered verdict returns H0. For $T \le 10^{-8}$ the injected noise is negligible; for $T \ge 10^{-7}$ the noise floor dominates and accuracy collapses ($90.38\%$ at $T{=}0$ down to $79.79\%$ at $T{=}10^{-3}$). The preconditioner-free SGLD arm \cite{welling2011bayesian} agrees---no temperature meaningfully improves accuracy---but is a weak testbed, since unpreconditioned descent barely minimizes, plateauing at loss $6.9\times 10^{-2}$. Both tests return \textbf{H0: the tempering reinterpretation is not supported}.

\textbf{What the negative reveals.} Ruling out depth \emph{and} temperature isolates the real mechanism. At matched training-loss depth the trajectories diverge sharply in generalization: a plain RMSprop minimizer (pSGLD, $T = 0$) reaches loss $1.1\times 10^{-6}$ at $\mathbf{90.37\%}$ and steady Adam ($\mathrm{lr}{=}0.01$) $6.8\times 10^{-5}$ at $90.30\%$, yet the repaired FixedRK3Adam reaches the \emph{same} ${\sim}10^{-6}$ depth at only $88.7$--$89.0\%$, and its deepest configuration ($3.8\times 10^{-7}$) at $86.95\%$---\textbf{1.3 to 3.4 points worse} than a steady first-order minimizer at equal or greater depth. Depth is generalization-neutral; temperature only hurts; so the deficit is a property of \emph{which minimum the trajectory selects}. The controller's emergent warmup--growth--reject schedule evidently lands in a worse-generalizing region than the smooth descent of a fixed-step method (we do not measure curvature, so we stop short of asserting \emph{sharper} minima). The right frame is implicit-bias-of-optimization, not tempering; and it is one more way the RK machinery changes the \emph{path} without improving the \emph{destination}---the paper's thesis, sharpened on the very axis where the repaired controller had looked like it might matter.

\section{Related Work}
\label{sec:related}

\textbf{Optimizers as ODE discretizations.} The continuous-time view was crystallized by \citet{su2016differential}, who derived a second-order ODE as the continuum limit of Nesterov's accelerated gradient method; a large literature has since studied optimization dynamics through Lyapunov analysis of such flows. The direct precedent for our study is \citet{zhang2018direct}, showing that explicit RK integration of the gradient/Nesterov flow attains accelerated \emph{convergence-rate} guarantees for smooth, sufficiently regular convex objectives. Crucially, those rates are stated per \emph{iteration}; each RK iteration of an $s$-stage scheme costs $s$ gradient evaluations, and the per-gradient-evaluation accounting we enforce is exactly the regime where the theoretical advantage must pay rent. Fran\c{c}a and collaborators' work on conformal-symplectic and dissipative-Hamiltonian integrators \cite{franca2020conformal,franca2021dissipative} makes a related structural argument---that preserving geometric properties of the flow, rather than raw order of accuracy, is what matters---consonant with our finding that higher-order accuracy per se (PureRK3(2), the only consistent integrator we test) never beats Adam on loss.

\textbf{RK-flavored practical optimizers.} Several recent proposals inject higher-order integration machinery into Adam-style updates. IMEX/semi-implicit schemes treat the preconditioner implicitly and the gradient explicitly: \citet{bhattacharjee2024imex} show that classical Adam is a first-order IMEX-Euler discretization and propose higher-order IMEX variants. A related line drives adaptive moments with neural-ODE machinery \cite{cho2022adamnodes}. Our design is deliberately representative of this family: Bogacki--Shampine 3(2) stages, FSAL reuse, embedded-pair step control, composed with Adam preconditioning. Two accounting conventions recur in this literature and flatter the RK side: (i) comparisons per optimizer \emph{step} rather than per gradient evaluation, hiding the 3--4$\times$ stage cost; and (ii) FSAL credited under minibatch sampling, where the cached stage belongs to a different (previous batch's) vector field and the reuse is not mathematically valid. Our protocol disallows both, and under it the headline advantages shrink or invert.

\textbf{Adaptive step-size control.} Embedded-pair local-error control \cite{bogacki1989pair,dormand1980family,hairer1993solving} is standard in ODE solvers, but its transplantation into optimizers is usually decorative: as we diagnose in Section~\ref{sec:diagnosis}, without a rejection branch, and with the error measured on a map that is never actually applied, the controller saturates and the method degenerates to fixed-step Adam with an averaged gradient. We are not aware of prior work that (a) instruments whether the controller in an RK-optimizer ever alters a step, or (b) repairs it and re-runs the comparison; Section~\ref{sec:repaired} does both. The emergent warmup-and-growth schedule we observe connects to the literature on learning-rate warmup---the repaired controller effectively \emph{discovers} a warmup schedule from local-error feedback.

\textbf{Optimization as sampling.} Casting an optimizer as a Markov chain that samples $\exp(-L/T)$ connects our repaired controller---itself an accept/reject kernel---to stochastic-gradient MCMC: SGLD \cite{welling2011bayesian}, its preconditioned form pSGLD \cite{li2016preconditioned}, and the momentum variant SGHMC \cite{chen2014stochastic}, whose friction term is the sampling-world analogue of our finding that minibatch noise corrupts the embedded error estimate. We use this lens actively in Section~\ref{sec:temperature}, grounding the controller's implicit tempering as an explicit temperature to test---and refute---a cold-posterior \cite{wenzel2020good} reading of its generalization deficit. The lens also explains why higher-order RK is a poor fit for sampling: the one setting where careful trajectory integration is genuinely load-bearing for a Markov chain---Hamiltonian Monte Carlo \cite{neal2011mcmc}---deliberately uses \emph{symplectic} leapfrog integration rather than generic RK, because the Metropolis correction needs reversibility and volume preservation that explicit RK lacks; and the HMC-integrator literature \cite{blanes2014numerical} selects schemes by energy error \emph{per force evaluation}---the same per-gradient accounting we enforce---concluding that structure preservation, not raw order, is what pays.

\textbf{Positioning.} Relative to the acceleration-theory line, we supply the missing per-eval empirical accounting in the non-convex setting; relative to the practical RK-optimizer line, we supply the controller instrumentation, the repaired-controller ablation, and the fixed-step control that isolates \emph{adaptivity} (not order, not step magnitude) as the operative mechanism.

\section{Discussion}
\label{sec:discussion}

\textbf{Training is not trajectory-tracking.} The optimization objective rewards reaching low loss, not shadowing the continuous gradient-flow solution; higher-order accuracy buys fidelity to a trajectory nobody needs, at a price (3--4$\times$ evals/step) that compute-matched evaluation makes visible. Once the budget is denominated in gradient evaluations, a method must convert its extra per-step accuracy into a $>3\times$ improvement in per-step progress just to tie---a bar none of the as-published variants clears in our experiments. Prior RK-optimizer results reporting per-iteration or wall-clock-favorable comparisons should be re-read with per-gradient-eval accounting.

\textbf{``Adaptive'' must be verified, not assumed.} The most transferable finding is methodological: the error controller in our as-designed method---and, we suspect, in relatives that share its structure---was doing literally nothing (bit-identical trajectories across tolerances; $h$ pinned at its cap from step one). None of the standard reporting practices in this literature would have caught this: train curves, final accuracies, and even tolerance ``sweeps'' all look like plausible experiments while the adaptive machinery is inert. We suggest a minimal reporting standard for adaptive-step optimizers: (i) the distribution of accepted $h$ over training---in particular the fraction of steps pinned at the ceiling (98--100\% in every cell of our sweep), (ii) the rejection rate, and (iii) a demonstration that at least two controller settings produce non-bit-identical trajectories. Point (iii) needs teeth: in our sweep, tolerances spanning two orders of magnitude were bit-identical to six decimals, so a narrow two-point ``sweep'' that happens to straddle no behavioral boundary proves nothing---the $h$ distribution in (i) is the check that cannot be faked.

\textbf{What the repaired controller tells us.} Two positive signals survive our protocol. First, the repaired controller shows that embedded-pair adaptivity composed with Adam yields an emergent warmup-and-growth step schedule that minimizes \emph{training loss} far below tuned Adam at equal budget (${\sim}40\times$)---a genuine mechanism, isolated from the large-step confound by a fixed-step control, and from the depth and tempering confounds by Section~\ref{sec:temperature}. Notably, the controller \emph{rediscovers} learning-rate warmup from local-error feedback alone, suggesting that hand-designed warmup schedules may be approximating an error-control policy; making that correspondence precise is an attractive theory question. Second, the $n{=}10$ multi-seed chase (Section~\ref{sec:fullbatch}) upgrades the implicit-regularization observation from anecdote to finding: RK3(2)-Adam beats lr-matched Adam and AdamW on test accuracy on 10/10 seeds despite ${\sim}3\times$ worse train loss---consistent with gradient averaging and integration error acting as implicit regularization. The catch is redundancy: RMSprop, NAdam, and lr-tuned Adam reach the same or better accuracy with no RK machinery, so the effect is real but not competitive---it repairs a deficiency of fixed-lr Adam rather than advancing the frontier.

\textbf{Where the train-loss win could matter.} A ${\sim}40\times$ train-loss advantage with no test-accuracy gain is not useless: deterministic small-data regimes where optimization \emph{is} the objective---physics-informed losses, implicit-layer and equilibrium-model inner solves, distillation to near-zero loss, overfitting benchmarks---are settings where full-batch gradients are genuine and FSAL is valid. That is the honest market for this mechanism. Any salvaged \emph{positive} claim must be about one of these, not about faster optimization of the deployed metric: on test accuracy, modestly tuned Adam ($\mathrm{lr}{=}0.01$, $90.30\%$) beats every RK variant at one third the per-step cost.

\subsection{Limitations}
\label{sec:limitations}

(1) \emph{Scale and scope}: all results are on MNIST (full set for stochastic, 1024-example subset for full-batch) with a single 784--128--10 MLP; the compute-matched \emph{methodology} transfers, but the empirical rankings may not. (2) \emph{Statistics}: $n{=}3$ seeds per cell for the main sweeps; headline gaps are far outside seed noise, and the secondary test-accuracy observation was chased at $n{=}10$ with paired statistics; the repaired-controller tables remain $n{=}3$. (3) \emph{One design point}: our inertness diagnosis is structural and should apply to relatives sharing the no-reject/raw-map-error design, but we did not re-implement published variants, and IMEX/implicit schemes have a different failure surface we do not probe. (4) \emph{Baseline tuning asymmetry}: Adam received a modest lr grid; the RK variants an equally modest $h_0/\rho$ grid; the repaired method's documented $h_0$ fragility means a finer sweep could move results in either direction. (5) \emph{Repaired-controller generality}: the ${\sim}40\times$ result is full-batch, deterministic, and fragile to $h_0$; we have no evidence it survives minibatch noise, where the error estimate is variance-dominated by our own diagnosis---repairing the controller does not repair the estimator. (6) \emph{Compute-matching granularity}: we count gradient evaluations and ignore per-step optimizer overhead, which slightly \emph{favors} the RK methods. (7) \emph{The implicit-regularization observation is post hoc}, emerging from inspection of result tables, and is presented accordingly. (8) \emph{Minimum-selection mechanism is identified only by exclusion}: we do not instrument loss-surface curvature at the selected minima, so the implicit-bias mechanism is named, not measured.

\section{Conclusion}
\label{sec:conclusion}

We set out to test a clean and appealing idea---that better ODE integrators make better optimizers---and found the honest answer to be \emph{mostly no, for an instructive reason}. Under a protocol that denominates budget in gradient evaluations, a representative Bogacki--Shampine 3(2) RK--Adam loses to plain Adam on training loss in both stochastic and full-batch settings, and its ``adaptive'' step controller turns out to be inert: the step size saturates on the first iteration and the tolerance parameter has no effect on the trajectory at all. Diagnosing the inertness and repairing it reverses the full-batch training-loss result by ${\sim}40\times$, but a fixed-step control localizes the benefit to \emph{adaptivity}---an emergent warmup-and-growth schedule---rather than to higher-order accuracy, and the benefit does not reach test accuracy. A pre-registered temperature sweep rules out the two off-the-shelf explanations for that gap, pinning the generalization shortfall on \emph{which} minimum the controller's trajectory selects rather than on how deep it minimizes.

The durable contributions are therefore methodological rather than a new optimizer: (1) a per-gradient-evaluation accounting discipline, with FSAL credited only where it is mathematically valid, that changes the sign of the comparison; (2) the observation that adaptive-step optimizers can ship a controller that never controls anything, together with a minimal reporting standard that would expose it; and (3) a seed-consistent implicit-regularization effect of RK gradient averaging that nonetheless never clears the tuned first-order baseline pool---a compact illustration of why optimizer claims need baseline pools rather than a single anchor. For anyone continuing the RK-optimizer program, the useful frontiers are the ones where the training-loss mechanism we isolated actually pays: deterministic, small-data, optimization-is-the-objective regimes; a variance-corrected error estimator that could make the controller meaningful under minibatch noise; and a precise account of the emergent warmup schedule as an implicit error-control policy. We release all code, logs, and results to make each of these directly checkable.

\section*{Reproducibility Statement}

All experiments run on CPU with deterministic seeding. Code, result JSONs, and logs are released with the paper: \texttt{RK4Optimizer.py} (optimizer implementations), \texttt{mnist\_experiment.py} (Experiment 1), \texttt{fullbatch\_experiment.py} (Experiment 2 and the hyperparameter cube), \texttt{fixed\_controller\_experiment.py} (Experiment 3), \texttt{hmax\_control\_experiment.py} (the fixed-step control), and \texttt{temperature\_sweep\_experiment.py} (Experiment 4, with the pre-registered H1/H0 criteria and automated verdict in the module docstring). Figures are regenerated by \texttt{make\_trajectory\_data.py} and \texttt{make\_figures.py}.

\bibliography{references}
\bibliographystyle{icml2025}

\end{document}